\documentclass{article}
\PassOptionsToPackage{sort}{natbib}

\usepackage{iclr2027_conference,times}
\usepackage{hyperref}
\usepackage{url}
\usepackage{amsmath,amssymb,amsthm,mathtools}
\usepackage{booktabs,multirow,array,longtable}\usepackage{graphicx}
\usepackage[caption=false]{subfig}
\usepackage{xspace}
\usepackage{microtype}
\usepackage{enumitem}
\usepackage{xcolor}
\usepackage{tikz}
\usetikzlibrary{arrows.meta,positioning,fit,calc}

\newtheorem{proposition}{Proposition}

\newcommand{\R}{\mathbb{R}}
\newcommand{\E}{\mathbb{E}}
\newcommand{\Prob}{\mathbb{P}}

\newcommand{\NMSE}{\ensuremath{\operatorname{NMSE}}\xspace}

\newcommand{\TrunQuant}{\mbox{TORQUE}\xspace}

\newcommand{\post}{\mathrm{post}}
\newcommand{\pre}{\mathrm{pre}}
\DeclareMathOperator*{\argmin}{arg\,min}

\definecolor{tqteal}{HTML}{007C83}
\definecolor{tqamber}{HTML}{D37A00}
\definecolor{tqblue}{HTML}{3366CC}
\definecolor{tqgray}{HTML}{4B5563}

\title{\TrunQuant: Optimizing What (not) to Quantize\\Before and After Rotation}

\iclrfinalcopy
\setcitestyle{numbers,square,comma}
\author{%
\begin{tabular}[t]{@{}ccc@{}}
\makebox[0.30\textwidth]{\textcolor{black}{\bfseries Ran Ben Basat}} &
\makebox[0.30\textwidth]{\textcolor{black}{\bfseries Michael Mitzenmacher}} &
\makebox[0.30\textwidth]{\textcolor{black}{\bfseries Shay Vargaftik}} \\
\textcolor{black}{\normalfont University College London} &
\textcolor{black}{\normalfont Harvard University} &
\textcolor{black}{\normalfont VMware Research by Broadcom}
\end{tabular}}

\begin{document}

\maketitle
\lhead{} 

\begin{abstract}

Uniform random rotations are an effective preprocessing step for quantization: they make normalized coordinate distributions approximately Gaussian, enabling the use of codebooks optimized offline. We introduce \TrunQuant, a framework that improves on previous quantization works that use random rotations by jointly optimizing how many and which coordinates to preserve at high precision both before and after rotation, under a fixed overall expected bit budget.

Intuitively, before rotation, preserving large input coordinates at high precision can reduce overall error by preventing the rotation from spreading their values across many coordinates. Likewise, after rotation, preserving a small fraction of the largest-magnitude coordinates at high precision allows the remaining values to be quantized more accurately using codebooks optimized offline for the resulting truncated Gaussian distribution.

We derive a quantization error upper bound and prove that top-\(k\) pre-rotation retention minimizes it for each \(k\). This reduces the search over coordinate subsets to an optimization over \(k\), enabling a fast optimizer that uses offline codebooks and parallel parameter selection for practical implementation. We demonstrate an improved tradeoff between reconstruction accuracy and storage cost through numerical evaluation under the Gaussian model and experiments on nearest-neighbor retrieval, KV-cache compression, and activation compression.

\end{abstract}

\section{Introduction}

{\color{black}
Random-rotation-based quantization reduces the memory, communication, and computation costs of machine learning tasks, with applications in distributed training \citep{vargaftik2022eden,suresh2017distributed,vargaftik2021drive,benbasat2024quicfl,dorfman2023docofl,li2024thc,zhao2026mlforml,warraich2025rdma,warraich2025optinic,chen2024trimming,warraich2025optireduce,han2024utility} and inference \citep{chee2023quip,tseng2024quipsharp,ashkboos2024quarot,liu2025spinquant,malinovskii2025higgs}, and approximate nearest-neighbor retrieval \citep{gao2024rabitq,benbasat2026note}.
}
Random rotations regularize the coordinate distribution: under a uniform random (Haar) rotation, every fixed-norm input induces the same spherical distribution, whose fixed-size marginals approach a Gaussian. This allows the use of precomputed codebooks optimized for that distribution with input-independent accuracy guarantees, while randomized Hadamard transforms provide efficient practical implementations \citep{vargaftik2022eden,yang2025raana,panferov2026quartetii}.
Production frameworks also support this approach: vLLM includes a backend with Hadamard rotation, and its Compressor supports rotation-based transforms \citep{vllm2026turboquant,vllm2026transforms}.

We identify two potential sources for improvement in quantization based on random rotations. First, a rotation can obscure useful structure in the original vector. For example, when a few coordinates have particularly large magnitudes (\emph{outliers}), preserving them at higher precision before the rotation can be more effective than spreading their values across the rotated vector. 
Second, the rotated vector can yield outliers that benefit from separate, higher-precision encoding, while the remaining values (the \emph{inliers}) are passed to a lower-bit quantizer. Importantly, this allows the inliers to be quantized over a narrower range, increasing accuracy.
We jointly optimize this outlier selection before and after the rotation to leverage both opportunities under a fixed bit budget.
Outlier retention was previously considered before and after rotation \emph{in isolation} and \emph{without such an optimization}.

\paragraph{Before rotation: preserving structure in the input.}
Special handling of outliers is common even in methods that use no rotation. LLM.int8() isolates outlier feature dimensions for 16-bit multiplication while processing the remaining features in 8 bits \citep{dettmers2022llmint8}; this mechanism is available in Hugging Face's bitsandbytes \citep{bitsandbytes2026llmint8}. SpQR preserves sensitive individual weights at higher precision, and OWQ preserves sensitive weight columns \citep{dettmers2024spqr,lee2024owq}. AWQ also motivates protecting salient weights, but implements that protection through activation-aware channel scaling rather than a retained high-precision subset \citep{lin2023awq}. 
HKVQ considers isolating outliers before quantizing the rest to INT2 format for disaggregated inference~\citep{zhang2025hack,zhang2026hkvq}.

\paragraph{After rotation: quantizing a bounded Gaussian distribution.}
Retention of post-rotation outliers has previously been considered by QUIC-FL \citep{benbasat2024quicfl}. QUIC-FL rotates the input, transmits values outside a central interval separately, and optimizes an unbiased stochastic quantizer for the bounded inlier distribution. There, rotation and outlier retention are complementary: the rotation supplies a predictable distribution, and retaining outliers separately allows finer quantization of the remaining coordinates. Overall, this results in increased accuracy.

The central question is \emph{how many and which values to retain at each stage}. While previous works considered outlier retention at different stages, they did not jointly allocate one vector's bit budget across pre-rotation retention, post-rotation retention, and inlier quantization. The decisions are coupled: retaining values reduces the error but costs value and index bits that could otherwise improve the inliers; pre-rotation retention reduces the norm of the remaining vector before normalization.

We introduce \TrunQuant\ (\emph{Two-stage Outlier-retention Rotated QUantization FramEwork}), a framework that models all these choices as a single optimization problem. Given a total expected bit budget, it determines the pre-rotation and post-rotation outlier retention policies, and the inlier bit-budget and codebook, charging both retained values and their positions. Importantly, the inlier codebooks are designed offline for the resulting induced (truncated Gaussian) distribution.

\TrunQuant\ applies to scalar and vector quantizers. Moreover, \TrunQuant\ preserves unbiasedness when the inlier reconstruction is unbiased. Namely, unbiased estimation of the inliers, exact retention of outliers, and linear recombination through the inverse rotation yield an unbiased estimate.

\paragraph{Contributions.}
\begin{enumerate}[leftmargin=*,itemsep=2pt,topsep=2pt]
  \item \textbf{Joint optimization formulation and framework.}
  We identify outlier retention before and after rotation as coupled decisions and introduce \TrunQuant, a framework that jointly optimizes them under a shared bit budget. Our formulation accounts for the costs of retained values, their indices, and quantizing the remaining coordinates.

  \item \textbf{Theoretical structure and fast online solution.}
  We derive an NMSE upper bound and prove that retaining the largest-magnitude input coordinates before the rotation minimizes this bound. This reduces the search over coordinate subsets to optimizing their size. We combine this with offline \mbox{codebook design and parallel candidate evaluation to enable fast online parameter selection.}

  \item \textbf{Evaluation across quantizers and ML workloads.}
  We numerically quantify worst-case error reductions for scalar and vector quantizers and evaluate on nearest-neighbor retrieval, KV-cache compression, and activation compression. Our results demonstrate that \TrunQuant\ consistently improves the tradeoff between reconstruction accuracy and bit-budget for rotation-based scalar and vector quantization schemes.
\end{enumerate}

\section{The \TrunQuant\ framework}
\label{sec:framework}

\begin{figure}[t]
\centering
\resizebox{.998779\linewidth}{!}{\begin{tikzpicture}[
  font=\fontsize{8}{9.8}\selectfont,
  node distance=7mm and 5mm,
  box/.style={
    draw=tqgray, line width=0.65pt, rounded corners=1.5pt,
    minimum height=11mm, align=center,
    inner xsep=3pt, inner ysep=3pt, fill=white
  },
  stage/.style={box, text width=26mm},
  exact/.style={
    box, draw=tqamber, fill=tqamber!7,
    minimum height=8mm, text width=36mm
  },
  core/.style={box, text width=30mm},
  control/.style={
    box, draw=tqblue, fill=tqblue!6,
    minimum height=7mm, inner ysep=2.5pt, text width=91mm
  },
  arr/.style={-{Latex[length=1.8mm]}, line width=0.7pt, draw=tqgray},
  bypass/.style={arr, draw=tqamber, rounded corners=2pt},
  ctrl/.style={
    -{Latex[length=1.65mm]}, line width=0.6pt,
    draw=tqblue, dashed
  }
]

\node[box, text width=5mm] (x) {{\Large $x$}};
\node[stage, right=of x] (pre)
  {Pre-rotation retention};
\node[stage, text width=22mm, right=of pre] (rot)
  {Normalization \\+ Rotation};
\node[core, right=of rot] (post)
  {Post-rotation retention\\+ \textcolor{blue}{($s$-bit)} Quantization};
\node[box, text width=20mm, minimum height=47mm, anchor=north west]
  (codec) at ([xshift=5mm]post.north east)
  {Representation};

\draw[arr] (x) -- (pre);
\draw[arr] (pre) -- (rot);
\draw[arr] (rot) -- (post);
\draw[arr] (post.east) -- (codec.west |- post.center);

\node[exact, below=3mm of pre] (head)
  {High-precision retained\\values \& indices \textcolor{blue}{(Top-$k$)}};
\node[exact, below=3mm of post] (tail)
  {High-precision retained\\values \& indices \textcolor{blue}{(Threshold $c$)}};

\draw[bypass] (pre.south) -- (head.north);
\draw[bypass] (post.south) -- (tail.north);

\coordinate (headlane) at ($(head.south)+(0,-3mm)$);
\coordinate (taillane) at ($(tail.south)+(0,-1.2mm)$);
\draw[bypass] (head.south)
  -- (headlane) -- (codec.west |- headlane);
\draw[bypass] (tail.south)
  -- (taillane) -- (codec.west |- taillane);

\path let \p1=(pre.north west), \p2=(post.north east) in
  node[control, text width={\x2-\x1-12mm-6.65pt}, anchor=south west]
    (opt) at ([xshift=6mm,yshift=6mm]pre.north west) {Joint optimization selects $k,c,s$};

\coordinate (inputroute) at ($(x.west)+(-4mm,0)$);
\coordinate (budgetport) at ([yshift=1.7mm]opt.west);
\draw[arr, draw=tqblue, rounded corners=2pt]
  (x.west) -- (inputroute) |- ([yshift=-1.5mm]opt.west);
\node[anchor=west, inner sep=0pt, font=\small, text=tqblue]
  (budget) at (inputroute |- budgetport) {Bit budget $b$};
\draw[arr, draw=tqblue, shorten <=1mm]
  (budget.east) -- (budgetport);

\draw[ctrl] (pre.north |- opt.south) -- (pre.north);
\draw[ctrl] (post.north |- opt.south) -- (post.north);

\coordinate (decodelevel) at ($(post.center)+(0,-34mm)$);
\node[core] (mergepost) at (decodelevel)
  {Dequantization \\ + Recombination of\\post-rotation\\ retained coordinates};
\node[stage, text width=22mm] (inverse) at (rot.center |- decodelevel)
  {Inverse rotation\\+ {Denormalization}};
\node[stage] (mergepre) at (pre.center |- decodelevel)
  {Recombination of\\pre-rotation\\ retained coordinates};
\node[box, text width=5mm] (decr) at (x.center |- decodelevel)
  {{\Large $\widehat x$}};

\draw[arr] (codec.west |- mergepost.center) -- (mergepost.east);
\draw[arr] (mergepost.west) -- (inverse.east);
\draw[arr] (inverse.west) -- (mergepre.east);
\draw[arr] (mergepre.west) -- (decr.east);

\coordinate (postlane) at ($(codec.south)+(0,-8mm)$);
\coordinate (postport) at ([xshift=-5mm]mergepost.south);
\draw[bypass]
  ([xshift=-3mm]codec.south)
  -- ([xshift=-3mm]codec.south |- postlane)
  -- node[above=1pt, inner sep=0pt, font=\footnotesize,
          align=center, text=tqamber]
     {Post-rotation high\\precision values \& indices}
     (postport |- postlane)
  -- (postport);

\coordinate (prelane) at ($(codec.south)+(0,-10mm)$);
\draw[bypass]
  ([xshift=3mm]codec.south)
  -- ([xshift=3mm]codec.south |- prelane)
  -- node[pos=0.75, above=3pt, inner sep=0pt, font=\footnotesize,
          align=center, text=tqamber]
     {Pre-rotation high-precision\\values \& indices}
     (mergepre.south |- prelane)
  -- (mergepre.south);

\node[anchor=south west, font=\bfseries\small, text=tqgray]
  at ($(x.north west)+(0,1.5mm)$) {Encoder};
\node[anchor=south west, font=\bfseries\small, text=tqgray]
  at ($(decr.north west)+(0,1.5mm)$) {Decoder};

\end{tikzpicture}
}
\caption{Illustration of the \TrunQuant\ framework. Given an input vector $x\in\mathbb{R}^d$ and a total budget of $db$ bits, the joint optimization selects the number $k$ of largest-magnitude coordinates to retain at high precision before rotation, the retention threshold $c$ after rotation, and the bit budget $s$ with a matching offline precomputed codebook $Q_{c,s}$ for quantizing the remaining entries.}
\label{fig:pipeline}
\end{figure}

Figure~\ref{fig:pipeline} illustrates \TrunQuant. Given the input and a target bit budget, the joint optimization selects the retained outliers before and after rotation, and the quantizer hyperparameters for the remaining (inlier) entries. We first explain the general workflow and then define the optimization problem.

Specifically, we first describe \TrunQuant with recommended design choices that trade off compression error and computational complexity. Appendix~\ref{app:implementation} discusses alternative implementation choices.

\subsection{Encoding and decoding}
\vspace{-2mm}

Let $k\in \{0,1,\ldots,d\}$, $c\in C \subset (0,\infty]$ and $s\in S \subset [0,\infty)$ be parameters that our joint optimization (\S\ref{sec:torque:subsec:opt_sol}) chooses based on the nonzero input $x \in \mathbb{R}^d$ and the bit-budget $b \ge 0$. Here, $C$ and $S$ are finite sets of supported thresholds and bit allocations.
As detailed in \S\ref{sec:Torque:subsec:Formulation}, for each $(c,s)$, we use an (offline) precomputed optimized codebook $Q_{c,s}$.
\paragraph{Encoding.}
First, the encoder stores the $k$ largest-magnitude input coordinates, along with their positions, and sets those positions in the input to zero. Importantly, the input keeps its original dimension throughout encoding, enabling fast in-place rotation. In practice, since only a small number of coordinates are expected to be stored separately, the additional memory overhead is small.

Choosing the $k$ largest-magnitude coordinates minimizes the error bound for a given $k$: after normalization and a uniform random rotation,\footnote{For theoretical analysis, we assume that the encoder and decoder share a uniform random rotation, also known as the Haar rotation, while randomized Hadamard transforms provide efficient practical implementations \citep{vargaftik2022eden,benbasat2024quicfl,yang2025raana,panferov2026quartetii}. Indeed, recent work has demonstrated that RHTs can approximate Haar rotations with provable guarantees \citep{ben2026quantizing,zilca2026approximating,feng2026provable}.} every input induces the same spherical distribution and hence the same expected quantization error.
Therefore, as we prove in Appendix~\ref{app:topk}, the optimization only needs to search over how many coordinates to retain.

After normalization and rotation, values whose magnitude exceeds the threshold $c$ are stored separately with high precision. For vector quantization, we retain the $q$-sized block if its norm exceeds $c$. The values that are not retained, i.e., the inliers, are then quantized using the precomputed codebook $Q_{c,s}$ which is optimized for the induced truncated Gaussian distribution.

Finally, the encoded representation
includes $c,s$, the retained values and positions from both retention stages, the quantized inlier encoding, and the normalization factor.

\paragraph{Decoding.}
The decoder starts by reconstructing the inliers and reinserting the values retained after rotation. It then applies the inverse rotation and reverses the normalization. Finally, it overrides the entries in the reconstructed vector with the values retained before rotation, using their original positions.
Importantly, if the quantization is unbiased, the decoding preserves unbiasedness.

{\color{black}
\paragraph{Generating the rotation.} As in previous work (e.g.,~\citep{suresh2017distributed,gao2024rabitq}), the encoder and decoder generate the same rotation using a common pseudorandom number generator seed. This means that the rotation need not be stored as part of the representation. 
Previous work also used shared randomness to optimize the quantization~\citep{benbasat2021singlebit,benbasat2025simba,han2026dynamiq}; this is orthogonal to \TrunQuant, and may improve its quantization further.
\par}

\subsection{Formulation}\label{sec:Torque:subsec:Formulation}

\paragraph{Error measure.}
Throughout the paper, we measure reconstruction error using the vector normalized mean squared error (NMSE). For a nonzero input $x\in\R^d$ and its reconstruction $\widehat x$, this is
{\small
\begin{equation}
  \NMSE=\frac{\E\|\widehat x-x\|_2^2}{\|x\|_2^2},
  \label{eq:nmse}
\end{equation}
}
where the expectation is over the rotation and any randomness in the quantizer.

\paragraph{Input normalization and retained coordinates.}

For $k\in\{0,\ldots,d\}$, let $x_{-k}\in\R^d$ be the vector obtained by setting the $k$ largest-magnitude coordinates to zero. We also use $x_k=x-x_{-k}$ to denote the vector with the largest-magnitude coordinates and zeros elsewhere.

Let $
  \rho_k=\frac{\|x_{-k}\|_2^2}{\|x\|_2^2}
$
be the fraction of the squared input norm remaining after retention.
If $\|x_{-k}\|_2=0$, all remaining coordinates are zero. The retained values already specify the entire input, so we skip rotation and quantization. This case has zero error and pays only the cost of pre-rotation retention.

When $\|x_{-k}\|_2>0$, we normalize it as $z_k=x_{-k}\cdot \sqrt{d}/\|x_{-k}\|_2\in \R^d$, which satisfies $\|z_k\|_2^2=d$.

\paragraph{Rotation and quantization.}
Let $R$ be the rotation and write $U=Rz_k$. The rotation makes the expected squared value of each coordinate equal to one.

For scalar quantization, we retain each rotated coordinate whose magnitude exceeds a threshold $c$. Each remaining coordinate lies in $[-c,c]$ and is encoded with a scalar quantizer.

The extension to vector quantization is direct. We partition $U$ into blocks of $q$ coordinates, where $q$ divides $d$, and retain a whole block $U_j$ when $\|U_j\|_2>c $.\footnote{As further optimization, outliers can be defined relative to a codebook $Q\subset\mathbb R^q$, retaining at high precision the blocks farthest from their nearest codeword. The codebook and the remaining inlier region can then be optimized jointly offline by alternating distance-threshold selection with codeword updates as we detail in Appendix~\ref{app:codebook-retention}. We do not employ this optimization since it increases encode time.} Each remaining block is encoded with a vector quantizer. Observe that the case $q=1$ recovers scalar quantization, and $c=\infty$ retains no rotated values in either case. Pre-rotation retention still selects individual coordinates.

\paragraph{Offline codebook construction.}
Before encoding any input, we choose a set $C$ of candidate thresholds and a set $S$ of candidate bit allocations. Here $s\in S$ is the number of bits per inlier coordinate. For each pair $(c,s)\in C\times S$, we design a quantizer $Q_{c,s}$ from the chosen scalar or vector family. The threshold determines the range of values the codebook represents, while $s$ determines how many bits are available to encode them.

Specifically, for a scalar quantizer, we optimize the codebook for a standard Gaussian conditioned to lie in $[-c,c]$. For a vector quantizer, we use a Gaussian block $G\sim N(0,I_q)$ conditioned on $\|G\|_2\le c $. We focus on centroid codebooks for these truncated Gaussian distributions, but our framework also supports any other codebook design.

Importantly, for each pair $(c,s)$, we compute the codebook $Q_{c,s}$ \textit{once}, \textit{offline}, and reuse it across input dimensions.\footnote{Alternatively, one can use adaptive methods to optimize the quantization for the specific rotated vector at the cost of increased complexity \citep{vargaftik2021drive,benbasat2024asq,benbasat2026ecasq}.} We also precompute the expected error of the complete reconstruction, including post-rotation retention, to allow a fast search during optimization.

We first consider scalar quantization. Let $G\sim N(0,1)$. Values with $|G|>c$ are retained exactly, while the remaining values are reconstructed using $Q_{c,s}$. The Gaussian-model error is therefore
\[
  \epsilon(c,s)=\E\left[
    (Q_{c,s}(G)-G)^2 \cdot \mathbf1_{\{|G|\le c\}}
  \right].
\]
Namely, only inliers contribute error.

For vector quantization, we apply the same construction to blocks of $q$ coordinates. We now take $G\sim N(0,I_q)$ and retain blocks whose norm exceeds $c $. The normalized error becomes
\begin{equation}
  \epsilon(c,s)=\frac1q\E\left[
    \|Q_{c,s}(G)-G\|_2^2 \cdot
    \mathbf1_{\{\|G\|_2\le c \}}
  \right].
  \label{eq:dist_nmse}
\end{equation}
We divide by $q$ because $\E\|G\|_2^2=q$. Both expectations include any randomness in the quantizer.

At runtime, the joint optimization selects a codebook together with the 2-stage retention parameters.

\paragraph{Storing retained values and their positions.}
Each retained value requires bits for the value itself and for its position. We use fixed-width indices for the positions. Let $v_{\text{pre}},v_{\text{post}}$ denote the value costs before and after rotation, and let $p_{\text{pre}},p_{\text{post}}$ denote the corresponding index costs. The total costs per retained scalar are
\begin{equation}
  L_{\text{pre}}=v_{\text{pre}}+p_{\text{pre}},\qquad
  L_{\text{post}}=v_{\text{post}}+p_{\text{post}}.
  \label{eq:retained-cost}
\end{equation}

\paragraph{The expected bit budget.}
Let $b$ be the target number of bits per input coordinate, giving a total budget of $db$. We first consider scalar quantization of a nonzero remaining vector. Let $p_d(c)$ be the expected fraction of coordinates retained after rotation. The expected number of stored bits is
\begin{equation}
  \mathcal B(k,c,s)=kL_{\text{pre}}+d\big[p_d(c)L_{\text{post}}+(1-p_d(c))s\big].
  \label{eq:joint-rate}
\end{equation}
The first term counts the values and positions retained before rotation. Inside the brackets, the first term pays for retention after rotation and the second pays for the inlier codes.

When $\rho_k=0$, we instead set $\mathcal B=kL_{\mathrm{pre}}$ and the objective to zero. Any required metadata costs are added before checking feasibility.

For vector quantization, we retain a block $U_j$ when $\|U_j\|_2>c $. We therefore replace the scalar retention probability $p_d(c)=p_{d,1}(c)$ with
$p_{d,q}(c)=\Prob(\|U_j\|_2>c )$.
Each coordinate is retained whenever its block is retained. Thus, \(p_{d,q}(c)\) is also the expected fraction of coordinates retained. Each retained block needs only one index, so its index cost per coordinate, $p_{\text{post}}$, is the number of bits in that index divided by $q$. With these changes, Equation~\eqref{eq:joint-rate} also applies to vector quantization.

The number of values retained after rotation varies with the rotation. Thus, $\mathcal B$ is the expected number of bits used to store retained values, their positions, and inlier codes, averaged over the rotation. The complete representation also includes metadata (e.g., the normalization factor and quantizer identifiers). Since the target budget covers the complete representation, we add the metadata costs to $\mathcal B$ before checking the budget constraint.

\subsection{Joint optimization and solution}\label{sec:torque:subsec:opt_sol}

\paragraph{Reconstruction error.}
The decoder first uses the quantizer to estimate inlier coordinates and overrides the retained positions with their values. Let $\widehat U$ denote this estimate. Inverse rotation gives the approximation of $z_k$:
\[
  \widehat z_k=R^\top\widehat U.
\]
A rotation preserves squared distances. Since $\|z_k\|_2^2=d$, its NMSE is
\[
  \epsilon_d(c,s)
  =\frac1d\E\|\widehat z_k-z_k\|_2^2
  =\frac1d\E\|\widehat U-U\|_2^2.
\]
The distribution of $U$ is the same for every $z_k$. Thus, for the selected quantizer family and dimension $d$, $\epsilon_d(c,s)$ can be precomputed \textit{offline} for each pair $(c,s)$.
The vector $U$ is uniform on the sphere of radius $\sqrt d$, and each coordinate has a scaled, shifted Beta distribution~\citep{vargaftik2021drive}. In practice, we approximate its fixed-size block marginals by Gaussians and use $\epsilon(c,s)$ as an approximation to $\epsilon_d(c,s)$.

To restore the scale, we multiply by $\|x_{-k}\|_2/\sqrt d$. Before overwriting the retained input positions, this gives the intermediate reconstruction
\[
  \widetilde x=x_k+\frac{\|x_{-k}\|_2}{\sqrt d}\widehat z_k,
  \qquad
  x=x_k+\frac{\|x_{-k}\|_2}{\sqrt d}z_k.
\]
Under exact retention,  $x_k$ appears in both expressions and cancels when we subtract them. Therefore,
\[
  \frac{\E\|\widetilde x-x\|_2^2}{\|x\|_2^2}
  =\rho_k\,\epsilon_d(c,s).
\]
At the retained input positions, $z_k$ is zero, so any nonzero reconstructed value there is noise. The decoder replaces this noise with zero before adding the retained values. This can only decrease the squared error. The final reconstruction $\widehat x$ therefore satisfies
\[
  \NMSE
  =\frac{\E\|\widehat x-x\|_2^2}{\|x\|_2^2}
  \le \frac{\E\|\widetilde x-x\|_2^2}{\|x\|_2^2} = \rho_k\,\epsilon_d(c,s).
\]

\paragraph{Selecting the parameters.}
We use the Gaussian approximation of this bound to compare candidates and choose the feasible combination with the smallest modeled error:
\begin{equation}
\boxed{
  (k^*,c^*,s^*)\in
  \argmin_{\substack{k\in\{0,\ldots,d\},\ c\in C,\ s\in S:\ \mathcal B(k,c,s)\le db}}
  \rho_k\,\epsilon(c,s).}
\label{eq:joint-selector}
\end{equation}
The search uses integer values of $k$, and the dimension stays $d$ for every candidate. When the remaining vector is zero, we use the exact reconstruction from retained values described above.

The choices affect each other. Retaining more input coordinates reduces the squared norm of the remaining vector, but leaves fewer bits for quantizing it. Similarly, lowering the post-rotation threshold narrows the inlier distribution, but spends more bits on high-precision values.

We include $k=0$ and $c=\infty$ among the candidates. This allows retention only after rotation, as well as quantization without retention at either stage. Thus, the selected Gaussian-model objective cannot exceed that of any feasible baseline included in the candidate set.

\paragraph{Solution complexity.}
As we prove in Appendix~\ref{app:topk}, pre-rotation retention reduces to choosing the number of retained coordinates. We present two alternative implementation options.

{
For fixed $(c,s)$, $\rho_k$ is nonincreasing in $k$, while $\epsilon(c,s)\ge0$ is independent of $k$. Thus, the largest feasible $k$ minimizes their product. Let $T(c,s)=d[p_d(c)L_{\text{post}}+(1-p_d(c))s]$. Thus, for each $(c,s)$ we obtain
$
  k_{\max}(c,s)=\min\left\{d,\left\lfloor{(db-T(c,s))}/{L_{\text{pre}}}\right\rfloor\right\},
$
provided $k_{\max}(c,s)\ge0$. Otherwise, the pair $c,s$ is infeasible. We sort the squared coordinates once, in $O(d\log d)$ time, and compute their cumulative sums in $O(d)$ time. Evaluating $\rho_{k_{\max}(c,s)}\epsilon(c,s)$ for a given $(c,s)$ then takes $O(1)$ time, giving total parameter-selection time $O(d\log d+|C||S|)$.
Moreover, the evaluation of each {$(c,s)$} candidate is independent and can be parallelized on a GPU. 

For the second implementation, let $\mathcal K=\{k_{\max}(c,s)\mid c\in C, s\in S\}$ be the set of all $k$ values that are of interest.
We run the Multiple Selection algorithm that allows computing the $|\mathcal K|$ quantiles of $|x|$ in $O(d\log |\mathcal K|)$ time.
Accumulating squared-coordinate sums during partitioning gives each required $\|x_{-k}\|_2^2$ and hence $\rho_k$, with ties split to retain exactly $k$ coordinates. 
The total parameter selection time is thus $O(d\log(|C||S|)+|C||S|)$.
}

{ Combining the two options, we obtain a runtime bound of $O(|C||S|+d\cdot \min\{\log d,\log(|C||S|)\}).$}

Small candidate sets are natural in practice. The implementation typically supports only a few bit allocations, keeping $S$ small. We can choose and refine the thresholds in $C$ offline until additional candidates offer little improvement. This balances accuracy against codebook storage and search time, while keeping codebook construction offline and allowing retention to adapt to each input.

For the rotation, the practical choice is to apply a randomized Hadamard transform to $z_k$ at $O(d\log d)$ time, and the decoder can apply its inverse. This option uses the same retention rules and precomputed codebooks. Moreover, using Hadamard compositions or dithering offers $O(d\log d)$ time while providing theoretical guarantees that asymptotically match those of a uniform random rotation \citep{ben2026quantizing,zilca2026approximating,feng2026provable}.

Overall, the complexity remains largely determined by the inlier quantizer and its rotation. \TrunQuant reuses these operations, adding parameter selection and $O(d)$ work for thresholding and recombination. For small, fixed sets $C,S$, this additional work is $O(d\log d)$. Thus, even when the inlier quantizer uses the fast $O(d\log d)$ rotation, \TrunQuant preserves its asymptotic complexity.

\section{Experiments}
\label{sec:experiments}

\textcolor{black}{The application experiments compare the scalar quantizers EDEN~\citep{vargaftik2022eden},}
RaBitQ~\citep{gao2024rabitq,gao2024extendedrabitq}, and
TurboQuant~\citep{zandieh2025turboquant}, with and without \TrunQuant.
Section~\ref{sec:eval-gaussian} also evaluates the vector quantizer
HIGGS~\citep{malinovskii2025higgs} under the Gaussian model.
The main figures use unbiased scalar reconstruction and Gaussian reciprocal
reconstruction for HIGGS. Biased variants appear in
Appendix~\ref{app:experiments}.

Retained values cost $v=16$ bits each and indices cost $p=8$ bits each.
For scalar applications, we divide vectors into consecutive chunks of
at most $256$ coordinates. A vector block of $q$ coordinates shares one index,
so its retention costs $16q+p$ bits and supports at most $256$ blocks
per rotation.

\subsection{Activation and KV-cache compression}
\label{sec:eval-activations}

We evaluate Qwen2.5-0.5B
\citep{qwen2024qwen25} on the WikiText-2 test split
\citep{merity2016pointer}.
Here, we compare the unbiased variants of the above quantizers, while the biased variants are evaluated in Appendix~\ref{app:biased-activation-kv}.

\paragraph{Activations.}
We run the model on eight text segments of $256$ tokens each and
choose eight tokens from each segment. For these $64$ tokens, we
record four activation vectors within each of the model's $24$
decoder blocks: the input to the query/key/value projections, the
input to the attention output projection, the input to the MLP
gate/up projections, and the input to the MLP down projection.
We quantize each recorded vector in consecutive chunks of $256$
coordinates. If $128$ coordinates remain at the end, we quantize them as a separate chunk of size $128$. For \TrunQuant, each chunk uses both~retention stages.

As shown in Figure~\ref{fig:applications-kv}(a), \TrunQuant reduces the \NMSE by $30.8$--$51.5\%$ for EDEN, $31.8$--$86.2\%$
for RaBitQ, and $33.6$--$51.2\%$ for TurboQuant. Interestingly, for non-integer values of $b$, our improvement is larger than for integer ones. This is because it allows \TrunQuant to use an integer number of bits when quantizing inliers and use the residual bits for the retained values. For the raw inlier quantizers, a fractional budget means mixed-precision quantization, leading to a less favorable bit-budget to accuracy tradeoff as shown in EDEN \citep{vargaftik2022eden}.

\paragraph{KV caches.}
\label{sec:eval-kv}
We run the model on $16$ nonoverlapping text segments of $256$ tokens
each. For each segment, we process the first $255$ tokens to construct
the key and value caches in all $24$ decoder blocks. Each decoder block has two key/value heads, each with 64 coordinates, giving 128 key coordinates and 128 value coordinates per token. We quantize each head’s key and value vectors separately, using both retention stages for TORQUE.

As shown in Figure~\ref{fig:applications-kv}(b) and (c), \TrunQuant reduces
mean key and value NMSE at every tested budget. Key NMSE decreases by
$13.9$--$44.1\%$ for EDEN, $16.4$--$94.7\%$ for RaBitQ, and
$19.7$--$44.8\%$ for TurboQuant. The corresponding value NMSE reductions
are $0.8$--$19.6\%$, $1$--$92\%$, and $1.2$--$19.4\%$.

\begin{figure}[t]
\centering
\includegraphics[width=.9995\linewidth]{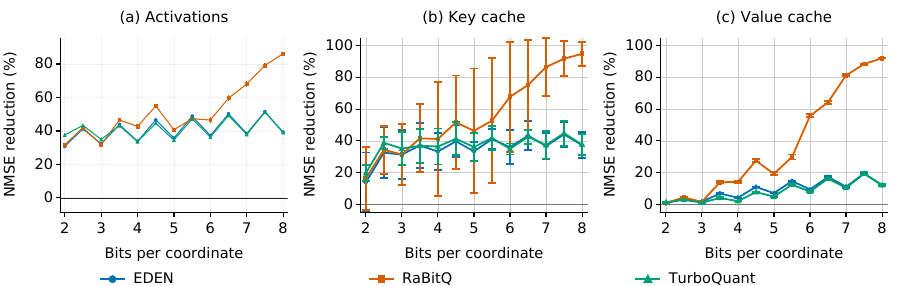}
\caption{Qwen2.5-0.5B \TrunQuant \mbox{improvement:
(a) activations; (b) key caches; (c) value caches.}}
\label{fig:applications-kv}
\end{figure}

\subsection{Nearest-neighbor retrieval}
\label{sec:eval-retrieval}

Figure~\ref{fig:applications-ann} uses the complete GloVe-200~\citep{pennington2014glove} index
of $1{,}183{,}514$ vectors and all $10{,}000$ queries, with cosine
similarity. We reconstruct the quantized database vectors and measure
NMSE and Recall@1: the fraction of queries for which the retrieved
nearest neighbor agrees with the full-precision search.

Figure~\ref{fig:applications-ann}(a) shows the NMSE reduction,
(b) the change in recall, and (c) Recall@1.
Appendix~\ref{app:retrieval} includes the GIST~\citep{jegou2010product} dataset and biased retrieval
comparisons.

At the same bit budget, \TrunQuant lowers NMSE for all three quantizers.
Across the 13 plotted budgets $b=2,2.5,\ldots,8$, the mean paired NMSE
reductions are up to $20.9\%$ for EDEN,
$81.5\%$ for RaBitQ, and $20.3\%$ for TurboQuant, respectively. Recall@1 also improves,
with gains of up to $1.5$ percentage points.

\begin{figure}[t]
\centering
\includegraphics[width=.995\linewidth]{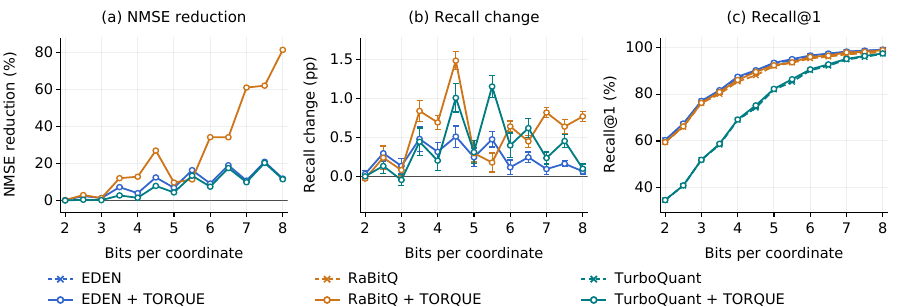}
\caption{GloVe retrieval: (a) NMSE reduction, (b) change in recall, and
(c) Recall@1.}
\label{fig:applications-ann}
\end{figure}

\begin{figure}[!ht]
  \centering
  \subfloat[Gaussian]{\includegraphics[width=.995\textwidth]{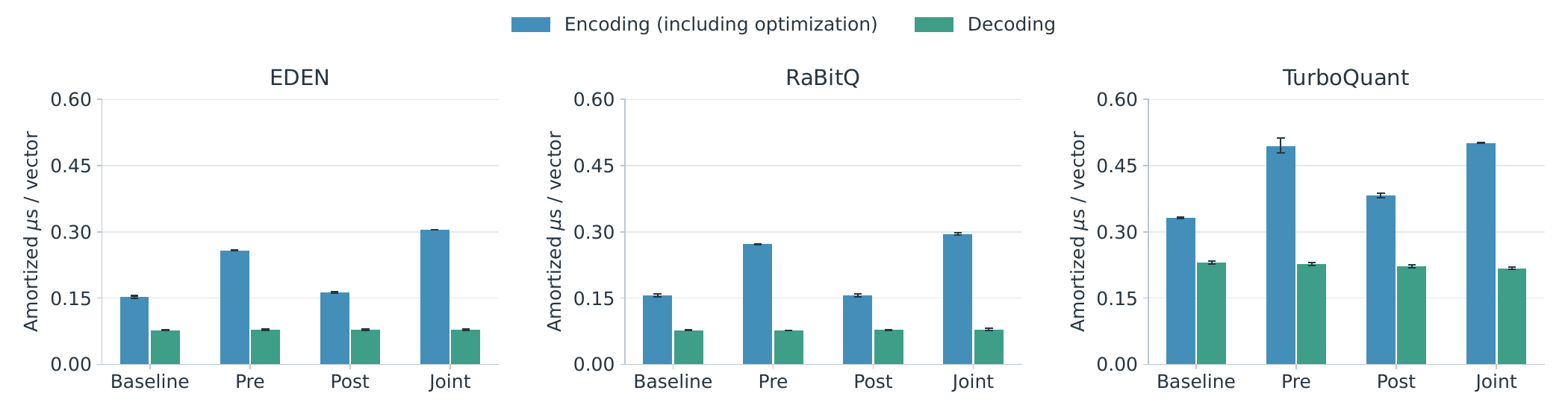}}
    \\\subfloat[Student-$t_3$]{\includegraphics[width=.9949\textwidth]{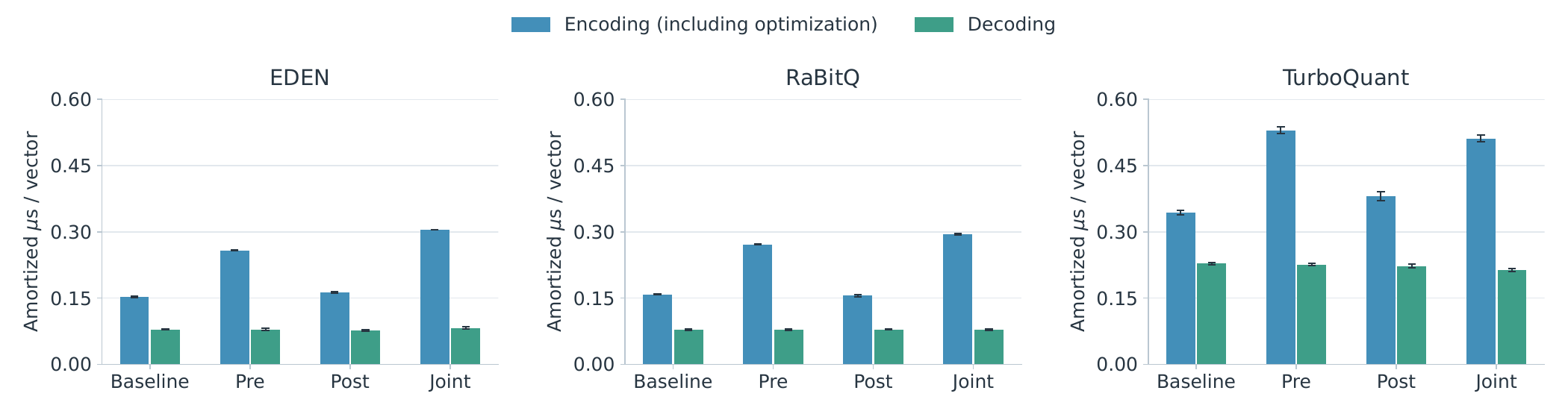}}
  \caption{CUDA encoding and decoding time on an NVIDIA RTX 5090
    for $d=256$, batch size 128, and a target coordinate budget
    of 4.5 bits.
Pre, Post, and Joint
    denote TORQUE retention before rotation, after rotation,
    and at both stages,
    respectively.
}
  \label{fig:cuda-runtime}
\end{figure}

\subsection{Encoding and decoding time}
\label{app:runtime}

Figure~\ref{fig:cuda-runtime} shows encoding and decoding times for
normalized Gaussian and Student-$t_3$ inputs on an NVIDIA RTX 5090.
Encoding includes optimization. Our C++17/CUDA implementation
uses randomized Hadamard rotations and fused kernels that keep
intermediate values in shared memory.

We compile with NVCC 13.3 for \texttt{sm\_120}, MSVC 19.44, and
\texttt{-O3 --use\_fast\_math}. Launch configurations are tuned
separately for each variant on a held-out seed and fixed before
measurement. We average five seeds and report $95\%$ confidence intervals.

At $d=256$, batch size $128$, and a $4.5$-bit coordinate budget,
encoding time relative to the corresponding baseline increases by
$48.9$--$73.7\%$ with pre-rotation retention and by
$48.4$--$99.7\%$ with joint retention, across the three quantizers
and both input distributions.
For post-rotation retention, the encoding-time increase is modest and always under $15.7\%$.
Across all three retention variants,
decoding time is barely affected.

Additional results for
native CPU and Apple GPU implementations appear in
Appendix~\ref{app:local-runtime}.

\subsection{Gaussian-model error and retention choices}
\label{sec:eval-gaussian}

Figure~\ref{fig:skew-gaussian}(a) isolates the benefit of retention
after rotation under the Gaussian model. Since the post-rotation distribution is independent of the input, this corresponds to the \textit{worst-case }attainable \NMSE, up to an additive factor that decays with the dimension~\cite{vargaftik2022eden}.
We compare each quantizer
with and without retention. The scalar curves use EDEN, RaBitQ,
and TurboQuant on standard Gaussian coordinates. The vector curves use
HIGGS on blocks $G\sim N(0,I_q)$ of $q=2$ or $q=4$ coordinates.
HIGGS maps each block to its nearest codeword in Euclidean distance.
For each $c,s,q$ combination, we fit a codebook offline for the induced truncated Gaussian distribution.

At each budget, we select the threshold and bit allocation with the smallest estimated error among the feasible candidates, including
no retention ($c=\infty$).

At the same expected bit budget, \TrunQuant reduces scalar NMSE by
up to $25.69\%$ for EDEN, $53.73\%$ for RaBitQ, and $24.65\%$
for TurboQuant. For HIGGS, the plotted NMSE reductions reach $11.12\%$ for $q=2$
and $3.76\%$ for $q=4$.
Again, we see the benefit of using slightly-more-than-integral values for $b$, which allows \TrunQuant  to use the fractional leftover for the outliers.

Figure~\ref{fig:skew-gaussian}(b) examines the benefit of retaining
large input coordinates before rotation based on the skewness of the input distribution. We compare EDEN with
retention before, after, or at both stages on signed-lognormal and
Student-$t$ inputs. Signed-lognormal coordinates have independent
random signs and log-magnitudes drawn from $N(0,\theta^2)$.
Student-$t$ coordinates have $\nu$ degrees of freedom, plotted as
$h=1/\nu$. Increasing $\theta^2$ or $h$ makes large coordinates
more pronounced. For each of five seeds, we draw $256$ vectors of
dimension $256$ and normalize them to unit norm. All retention choices
share the same inputs and randomized Hadamard rotations.
The budget is eight bits per coordinate.

Relative
to EDEN, \TrunQuant reduces NMSE by up to $99.68\%$. The improvement's source depends on input skewness: when the data is heavy-tailed (low $\theta^2$ or $h$), \TrunQuant's improvement stems from post-rotation retention. In contrast, when the data is skewed, the pre-rotation retention does most of the heavy lifting.
Together, these results illustrate how the two
stages can complement each other.

\begin{figure}[t]
\centering
\begin{minipage}[t]{.49\linewidth}
\centering
\includegraphics[width=\linewidth]{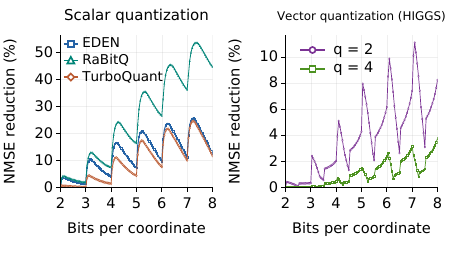}
\par\smallskip
{\small (a) Gaussian-model: scalar and vector quantization.\par}
\end{minipage}\hfill
\begin{minipage}[t]{.49\linewidth}
\centering
\includegraphics[width=\linewidth]{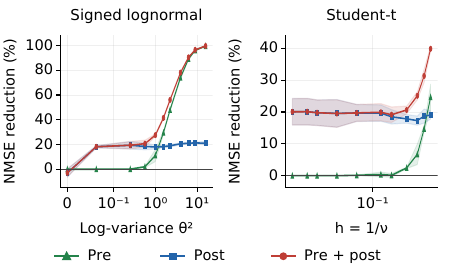}
\par\smallskip
{\small (b) Improvement breakdown compared to EDEN.
\par}
\end{minipage}
\caption{NMSE reductions from applying \TrunQuant.}
\label{fig:skew-gaussian}
\vspace{-1mm}
\end{figure}

\section{Limitations and conclusion}
We presented \TrunQuant, a framework that jointly optimizes outlier retention before and after rotation under a shared bit budget. Our evaluation shows that \TrunQuant improves the reconstruction accuracy of state-of-the-art scalar and vector quantizers across different input distributions and applications.

The main computational cost is the additional encoding time required for online optimization. This relative overhead decreases as the vector dimension grows, while decoding time remains close to that of the underlying quantizer. The encoding cost is less important in applications where data is quantized once and decoded many times, making decoding latency the main concern. Examples include KV caches for reusable system prompts and RAG contexts, \mbox{as well as post-training weight quantization.}

An interesting direction for future work is to reduce the optimization cost by exploiting slowly changing input distributions (e.g., gradients over different training rounds) or sampling a small number of chunks out of a large vector we wish to quantize for optimizing the parameters.

\bibliographystyle{unsrtnat}
\bibliography{references}

\appendix

\section{Optimal retention before rotation}
\label{app:topk}

We justify the top-$k$ rule used in Section~\ref{sec:framework}. The argument uses the rotation and exact-retention assumptions stated there. We first fix the number of retained coordinates and compare the possible sets of positions. Each set must have the same storage cost and allow the same quantizer parameters.

\paragraph{Error for a chosen set of positions.}
Let $x\ne0$ be the input, let $\mathcal I\subseteq\{1,\ldots,d\}$ contain $k$ retained positions, and let $h_{\mathcal I}$ contain the input values at those positions and zeros elsewhere. The fraction of the squared input norm left in the remaining vector is
\[
  \rho_{\mathcal I}=\frac{\|x-h_{\mathcal I}\|_2^2}{\|x\|_2^2}
  =1-\frac{\sum_{i\in\mathcal I}x_i^2}{\|x\|_2^2}.
\]
When $\rho_{\mathcal I}>0$, we normalize the remaining vector as
\[
  z_{\mathcal I}=\frac{\sqrt d}{\|x-h_{\mathcal I}\|_2}(x-h_{\mathcal I}),
  \qquad \|z_{\mathcal I}\|_2^2=d.
\]
For fixed parameters $(c,s)$, let $\widehat z_{\mathcal I}$ be its reconstruction. The rotation makes its expected normalized squared error independent of the direction of $z_{\mathcal I}$, so
\[
  \frac1d\E\|\widehat z_{\mathcal I}-z_{\mathcal I}\|_2^2=\epsilon_d(c,s).
\]
Restoring the scale and adding the retained values gives the intermediate reconstruction
\[
  \widetilde x=h_{\mathcal I}+\frac{\|x-h_{\mathcal I}\|_2}{\sqrt d}\widehat z_{\mathcal I}.
\]
The input has the same expression with $z_{\mathcal I}$ in place of $\widehat z_{\mathcal I}$, so the exactly retained part cancels when we subtract them. The decoder then restores the retained positions to their exact input values, giving $\widehat x$. This replaces the error at those positions by zero and can only decrease the squared error. Therefore,
\begin{equation}
  \NMSE
  =\frac{\E\|\widehat x-x\|_2^2}{\|x\|_2^2}
  \le\frac{\E\|\widetilde x-x\|_2^2}{\|x\|_2^2}
  =\rho_{\mathcal I}\,\epsilon_d(c,s).
  \label{eq:head-error-factorization}
\end{equation}
If $\rho_{\mathcal I}=0$, the retained values already specify $x$ and the error is zero.
If any set of $k$ positions has this property, the top-$k$ set also does and achieves zero error at the same retention cost. The remaining argument concerns nonzero remaining vectors.

\paragraph{Choosing the retained positions.}
For fixed $k$, retention costs $kL_{\text{pre}}$ bits. The dimension, post-rotation position cost, and remaining budget are unchanged when we replace one set of retained positions with another. Thus, the feasible pairs $(c,s)$ are the same. For each pair, only $\rho_{\mathcal I}$ in the bound in Equation~\eqref{eq:head-error-factorization} depends on the chosen positions.

\begin{proposition}[Optimal retention for the error bound]
\label{prop:topk}
Assume exact retention and a nonnegative, direction-independent expected normalized squared error $\epsilon_d(c,s)$ for the normalized remaining vector. Suppose all sets of $k$ retained positions have the same storage cost and feasible quantizer parameters. Then retaining the $k$ largest-magnitude input coordinates minimizes the bound
\[
  \left(1-\frac{\sum_{i\in\mathcal I}x_i^2}{\|x\|_2^2}\right)\epsilon_d(c,s)
\]
over all sets $\mathcal I$ of size $k$, for every feasible pair $(c,s)$. Consequently, the search over $k,c,s$ in Equation~\eqref{eq:joint-selector} needs no additional search over position sets.
\end{proposition}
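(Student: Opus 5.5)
The plan is to reduce the claim to a purely combinatorial statement about sums of squares, using the factorization in Equation~\eqref{eq:head-error-factorization}. Fix $k$ and a feasible pair $(c,s)$. By hypothesis, the storage cost $kL_{\text{pre}}$ and the set of feasible quantizer parameters do not depend on which $k$ positions are retained. Moreover, $\epsilon_d(c,s)\ge 0$ does not depend on the direction of the normalized remaining vector, so it is the same constant for every $\mathcal I$. Hence the bound $\rho_{\mathcal I}\,\epsilon_d(c,s)$ is a nonnegative constant times $1-\sum_{i\in\mathcal I}x_i^2/\|x\|_2^2$.

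Because $\epsilon_d(c,s)\ge0$, multiplying by it preserves weak orderings. It therefore suffices to show that the top-$k$ set $\mathcal I^*$ maximizes $\sum_{i\in\mathcal I}x_i^2$ over all $|\mathcal I|=k$. I would use an exchange argument. If $\mathcal I\neq\mathcal I^*$, pick $j\in\mathcal I^*\setminus\mathcal I$ and $i\in\mathcal I\setminus\mathcal I^*$. Then $x_j^2\ge x_i^2$ by the definition of top-$k$, with ties broken as in the encoder, so swapping $i$ for $j$ does not decrease the sum. After at most $k$ swaps we reach $\mathcal I^*$. Equivalently, one can sort $|x|$ and compare the two sums term by term.

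Next, I would handle the degenerate case $\rho_{\mathcal I}=0$ separately, since normalization is undefined there. This is already covered in the text: if some $k$-set leaves a zero remainder, then all nonzero coordinates lie in that set, so there are at most $k$ of them. The top-$k$ set then contains them all and also gives $\rho=0$, hence zero error. Conversely, if the top-$k$ set has $\rho>0$, every set does, so the comparison above applies.

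Finally, for the ``consequently'' part, the objective in Equation~\eqref{eq:joint-selector} is the minimum over $(k,c,s)$ and over position sets. Since for each fixed $(k,c,s)$ the inner minimum over $\mathcal I$ is attained by the top-$k$ set, and that set does not depend on $(c,s)$, the joint minimization collapses to a search over $(k,c,s)$ using $\rho_k$ alone. The substance lies entirely in the modeling hypotheses, so I expect no real obstacle. The only delicate point is being explicit that ties yield equal bound values, so the minimizer need not be unique, and that the case $\epsilon_d=0$ makes every set optimal, which is consistent with the claim.
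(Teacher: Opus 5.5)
Your proposal is correct and follows the same route as the paper's proof. Both treat the bound as the nonnegative constant $\epsilon_d(c,s)$ times $\rho_{\mathcal I}$, and both use a pairwise exchange (swapping a retained coordinate for a larger unretained one never increases $\rho_{\mathcal I}$) to reach the top-$k$ set. Your extra care with ties, the $\rho=0$ case, and the collapse of the joint search spells out details the paper handles in the surrounding text.
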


\begin{proof}
Suppose $i\in\mathcal I$ and $j\notin\mathcal I$ satisfy $x_i^2<x_j^2$. Replacing $i$ with $j$ reduces $\rho_{\mathcal I}$ by $(x_j^2-x_i^2)/\|x\|_2^2$. Since $\epsilon_d(c,s)\ge0$, this replacement cannot increase the bound. It also leaves the storage cost and feasible parameters unchanged.

Repeating these replacements gives the $k$ largest squared coordinates. This set minimizes the bound for every feasible pair $(c,s)$, and therefore also after selecting the best such pair.
\end{proof}

Since $k=0$ is included in the joint search, the minimum objective therefore cannot exceed that of the best feasible candidate using retention only after rotation. The search may select $k=0$ when retaining input coordinates offers no improvement.

\paragraph{Other dimensions and error estimates.}
The same argument applies when retained coordinates are removed before rotation. For fixed $k$, the dimension $m=d-k$ is the same for every position set. We normalize the remaining vector to squared norm $m$ and replace $\epsilon_d(c,s)$ by $\epsilon_m(c,s)$. The same top-$k$ conclusion follows. For vector quantization, this option requires $q$ to divide $m$; when no nonzero coordinates remain, rotation and quantization are skipped.

\section{Alternative implementation choices}
\label{app:implementation}

The formulation in Section~\ref{sec:framework} keeps all $d$ coordinates and uses fixed-width indices. Every choice of $k$ can therefore use the same rotation implementation and precomputed tables of errors and expected bit costs. The retained positions are also simple to store and recover. Together with truncated Gaussian codebooks constructed offline, these choices offer a useful compromise between storage efficiency and a fast practical implementation. Below, we discuss other implementation options, not used in our evaluation, that may offer a better accuracy to representation size trade-off at the cost of increased complexity and storage.

\paragraph{Dimension and reconstruction.}
One option is to remove the retained input coordinates and collect the remaining coordinates into a smaller vector. Write $m$ for the dimension entering the rotation: the main formulation uses $m=d$, while this option gives $m=d-k$. For a nonzero remaining vector, we normalize the smaller vector to obtain $z_k\in\R^m$ with $\|z_k\|_2^2=m$, and write $U=Rz_k$. The decoder rescales by $\|x_{-k}\|_2/\sqrt m$ and reinserts the retained input values at their original positions. The expected bit cost is then
\[
  \mathcal B(k,c,s)=kL_{\text{pre}}+m\big[p_m(c)L_{\text{post}}+(1-p_m(c))s\big]
\]
and the objective becomes $\rho_k\epsilon_m(c,s),$
where $p_m(c)$ and $\epsilon_m(c,s)$ are the expected retained fraction and NMSE at dimension $m$. We substitute them into Equation~\eqref{eq:joint-selector}. For vector quantization, replace $p_m(c)=p_{m,1}(c)$ by $p_{m,q}(c)=\Prob(\|U_j\|_2>c )$ and consider only choices for which $q$ divides $m$. This option processes $d-k$ coordinates, but the rotation size and finite-dimensional tables now depend on $k$.

Since $\epsilon_m(c,s)$ depends on $k$ when $m=d-k$, the largest feasible count need not minimize $\rho_k\epsilon_m(c,s)$. The one-count-per-pair reduction in Section~\ref{sec:torque:subsec:opt_sol} therefore does not automatically apply. Enumerating the admissible counts with precomputed cost and error tables takes $O(d\log d+d|C||S|)$ time.

When keeping $m=d$, the decoder sets the retained positions of $\widehat z_k$ to zero before adding the retained values, as described in Section~\ref{sec:torque:subsec:opt_sol}. This removes their reconstruction noise, so $\rho_k\epsilon_d(c,s)$ is an upper bound on the final NMSE.

\paragraph{Value precision and index widths.}
For scalar retention with fixed-width indices, the position costs are $p_{\text{pre}}=\lceil\log_2 d\rceil$ and $p_{\text{post}}=\lceil\log_2 m\rceil$. With $m=d$, a simple choice is to use the same precision and index width at both stages: $v=v_{\text{pre}}=v_{\text{post}}$ and $p=p_{\text{pre}}=p_{\text{post}}=\lceil\log_2 d\rceil$. The two stages may also use different value precisions, with their costs included in $L_{\text{pre}}$ and $L_{\text{post}}$. For blocks of $q$ coordinates, one index identifies a whole block. The post-rotation cost per retained coordinate is then $v_{\text{post}}+\lceil\log_2(m/q)\rceil/q$.

\paragraph{Position encodings.}
When few input coordinates are retained, we can store differences between consecutive positions or encode the whole subset. These choices can use $p\approx\log_2(d/k)$ bits per position on average, plus overhead.

We can also store retained values in their original order and record where to reinsert them among the $d-k$ remaining entries. Each gap index uses $\lceil\log_2(d-k+1)\rceil$ bits, approximately $\log_2(d-k)$ when $d-k$ is large.

These choices can save index bits but require more coding work. The top-$k$ result in Appendix~\ref{app:topk} requires equal-size position sets to have the same storage cost and feasible quantizer parameters. At fixed dimension and fixed $(c,s)$, a cost that depends only on $k$ preserves the largest-feasible-count rule, but requires solving feasibility for that cost instead of using the fixed-width formula. If the cost depends on the retained positions themselves, the top-$k$ guarantee need not hold.

\paragraph{Encoding block positions jointly.}
Suppose $r$ of the $m/q$ blocks are retained. There are $\binom{m/q}{r}$ possible sets of retained positions.
Enumerative coding stores the index of the $r$-sized subset, which requires $\lceil\log_2\binom{m/q}{r}\rceil$ bits, the minimum fixed-length cost when $r$ is known. The decoder must also know $r$, which can be inferred from the representation or stored using $O(\log m)$ additional bits.

Under the Gaussian model (\S\ref{sec:Torque:subsec:Formulation}), a block is retained with probability $\pi=\Prob(\|G\|_2>c )$. Joint encoding uses roughly $H(\pi)$ bits per block for its position, or $H(\pi)/q$ bits per coordinate. Retained \mbox{coordinates use $v$ bits each, while inlier coordinates use $s$ bits each. Adding these costs gives}
\begin{equation}
 b_G(c,s)=\frac{H(\pi)}q+v\pi+(1-\pi)s,
 \qquad H(\pi)=-\pi\log_2\pi-(1-\pi)\log_2(1-\pi).
 \label{eq:gaussian-rate}
\end{equation}
Here, $H$ is the binary entropy function, with $H(0)=H(1)=0$. The three terms account for positions, retained values, and inlier codes.

For fixed $q,c$, the expected cost per coordinate is at most $b_G+O(\log m/m)$. This bound does not require independent blocks. Including retention before rotation, the expected total cost is therefore at most
$
  kL_{\text{pre}}+m b_G(c,s)+O(\log m).
$

\paragraph{Dimension-dependent codebooks.}
We use codebooks optimized for a truncated Gaussian distribution, making their design independent of the input dimension.

An alternative is to optimize codebooks for the exact distribution induced by the rotation: a scaled, shifted Beta distribution for scalar quantization, or the joint spherical marginal for vector quantization. For each threshold $c$, we condition this distribution on the coordinates or blocks that are not retained, then design one codebook for each bit allocation $s$. At a fixed rotation dimension $d$, this gives $|C||S|$ codebooks, shared by all pre-rotation choices of $k$. If we remove the retained input coordinates, the dimension becomes $d-k$. We then repeat this construction for each supported nonzero dimension, giving up to $d|C||S|$ codebooks.

Another option is to retain a fixed number of the largest-magnitude coordinates after rotation instead of using a threshold. This fixes the retention count and, with fixed-width encoding, makes the bit cost deterministic. However, the inlier distribution now depends on both the rotation dimension and the retention count. We therefore design one codebook for each supported retention count and bit allocation at each dimension. Supporting all counts gives up to $d|S|$ codebooks at a fixed dimension, or $O(d^2|S|)$ if the dimension also varies as $d-k$.

\section{Additional application results}
\label{app:experiments}

We extend the application experiments in Section~\ref{sec:experiments}
to biased reconstruction and GIST retrieval.

\subsection{Biased Activation and KV-cache compression}
\label{app:biased-activation-kv}
Figure~\ref{fig:applications-kv-biased} complements
Figure~\ref{fig:applications-kv} with EDEN-B, RaBitQ-B, and
TurboQuant-MSE, the biased variants of these quantizers. We evaluate
the activations using the setup in Section~\ref{sec:eval-activations}.
For KV caches, we use $256$ contexts and retention only after rotation.

\begin{figure}[!htbp]
\centering
\includegraphics[width=\linewidth]{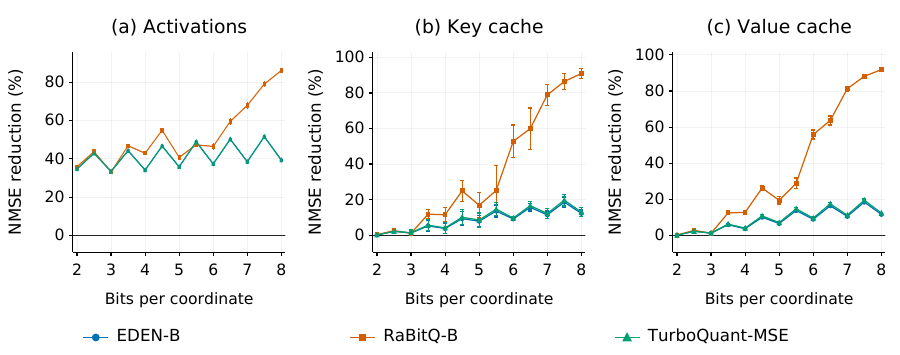}
\caption{Biased reconstruction: NMSE reductions for activations, key
caches, and value caches. The KV-cache results use retention only after rotation.}
\label{fig:applications-kv-biased}
\end{figure}

\subsection{Additional retrieval results}
\label{app:retrieval}

\paragraph{GIST retrieval.}
We extend the scalar comparison in Section~\ref{sec:eval-retrieval}
to squared Euclidean search on GIST-960~\citep{jegou2010product}, using its complete index of
one million vectors and all $1{,}000$ queries.

For unbiased GIST reconstruction, we store the original squared norm
and include its cost in the metadata. For a query $y$ and an index
vector $x$, the distance estimate is
\[
  \|y\|_2^2+\|x\|_2^2-2\langle y,\widehat x\rangle.
\]
It uses the stored $\|x\|_2^2$, rather than the squared norm of the
reconstruction.

Figure~\ref{fig:applications-ann-gist} is the GIST counterpart of
Figure~\ref{fig:applications-ann}. It shows the $13$ half-bit budgets;
the full sweep also includes the intervening quarter-bit measurements.
Lower reconstruction NMSE does not consistently improve Recall@1.

\begin{figure}[!htbp]
\centering
\includegraphics[width=\linewidth]{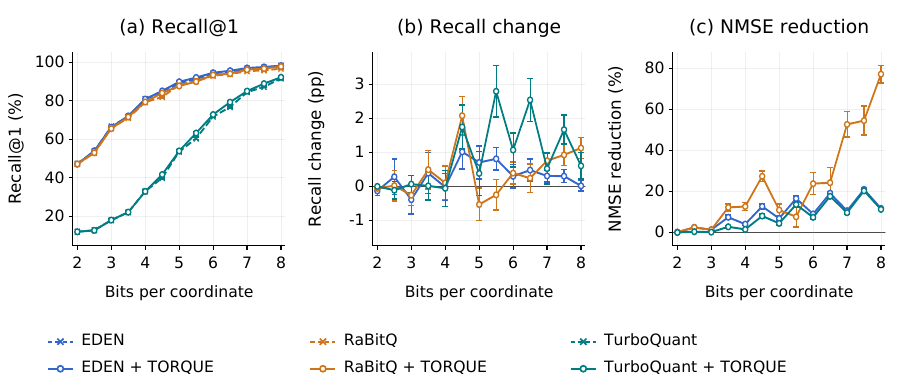}
\caption{GIST retrieval: Recall@1, recall change, and
NMSE reduction.}
\label{fig:applications-ann-gist}
\end{figure}

\paragraph{Biased reconstructions.}
Figure~\ref{fig:applications-ann-biased} evaluates EDEN-B, RaBitQ-B,
and TurboQuant-MSE on GloVe and GIST. We use the same seeds, budgets,
and retention stage as the unbiased comparisons. Combining the three
biased and three unbiased
variants gives $4{,}650$ comparisons per dataset over $775$
budget--seed configurations.

\begin{figure}[t]
\centering
\includegraphics[width=\linewidth]{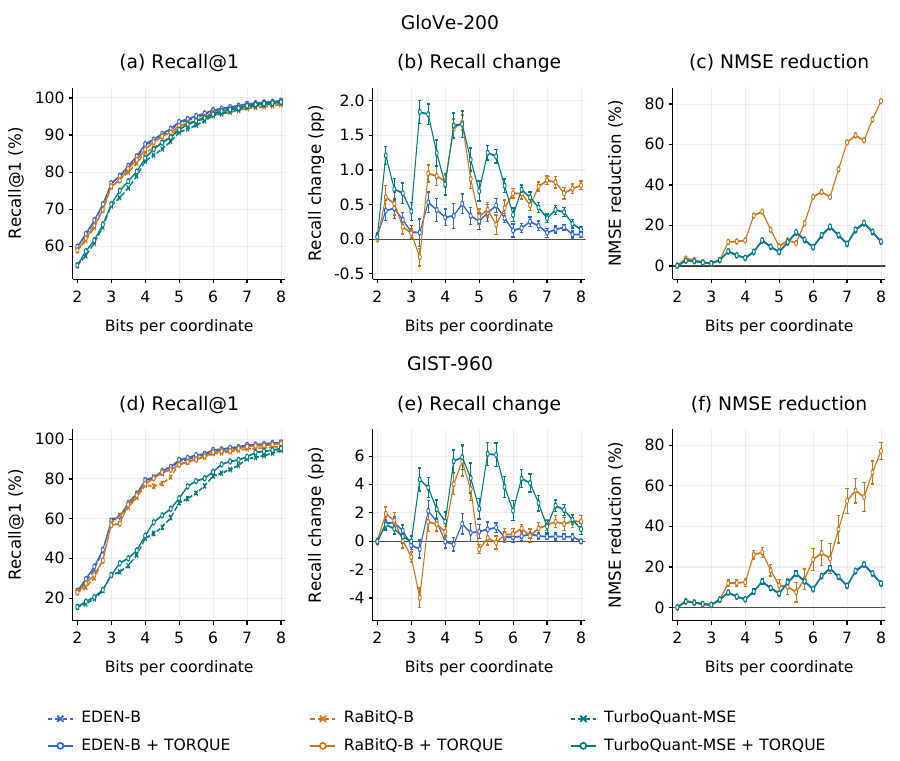}
\caption{Biased retrieval on GloVe and GIST: Recall@1,
recall change, and NMSE reduction.}
\label{fig:applications-ann-biased}
\end{figure}

\section{Native CPU and Apple GPU timing}
\label{app:local-runtime}

We extend Section~\ref{app:runtime} with runtime evaluation of \TrunQuant for CPU and Apple GPU.
All \TrunQuant\ measurements here use both retention stages.
Figures~\ref{fig:runtime-cpu} and~\ref{fig:runtime-metal} separate
parameter selection (\TrunQuant\emph{optimization}), encoding with
selected parameters (\TrunQuant\emph{encode}), and their directly timed
combination (\TrunQuant\emph{joint}). The decoding plots compare
complete reconstruction times.

\paragraph{Hardware and implementation.}
We use an Apple M2 Max with eight performance cores, four efficiency
cores, a 30-core GPU, and 32\,GiB of memory, running macOS~15.7.2.
The CPU implementation uses C++17, NEON instructions, and twelve
persistent workers; Apple Clang~15.0.0 compiles it with
\texttt{-O3 -mcpu=native -ffast-math}. The GPU uses Metal kernels
with fast math enabled. Inputs, codebooks, parameter selection,
rotations, and reconstruction use FP16 on both platforms, including
norm and correction sums. Power-of-two rescaling limits overflow
and underflow. The CPU keeps conservative rounding for its selection
bounds. Rotations use cache blocks on the CPU and shared-memory
tiles on the GPU; indices and bit packing use integer arithmetic.

\paragraph{Reducing the sorting work.}
The GPU, and the CPU for $d\ge1024$, group coordinates into $256$
buckets by magnitude.
The buckets describe how large the coordinates are without sorting them
individually. For a proposed retained count $k$, some buckets are retained
completely, some remain completely, and at most one is split. Each
bucket's range bounds the sum of squares of its remaining coordinates.
For example, ten coordinates with magnitudes between $0.10$ and $0.12$
contribute between $0.10$ and $0.144$. Combining these bounds gives an
interval for the candidate's objective $\rho_k\epsilon_d(c,s)$.
If one candidate's lower bound exceeds another's upper bound, it cannot
win and can be skipped. The remaining choices often require only a
small subset of coordinates to be sorted. When the bounds overlap,
we keep both candidates and compare them using the actual values.

\begin{figure}\centering
\includegraphics[width=0.8\linewidth]{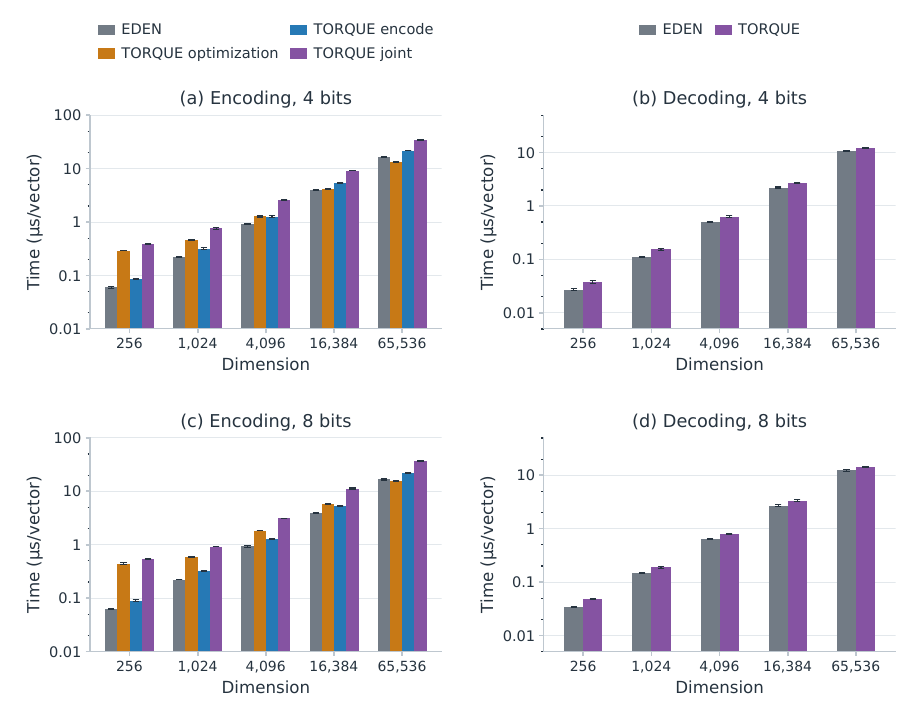}
\caption{Native CPU timing: Encoding and decoding
at four bits and eight bits.}
\label{fig:runtime-cpu}
\end{figure}

\begin{figure}
\centering
\includegraphics[width=0.8\linewidth]{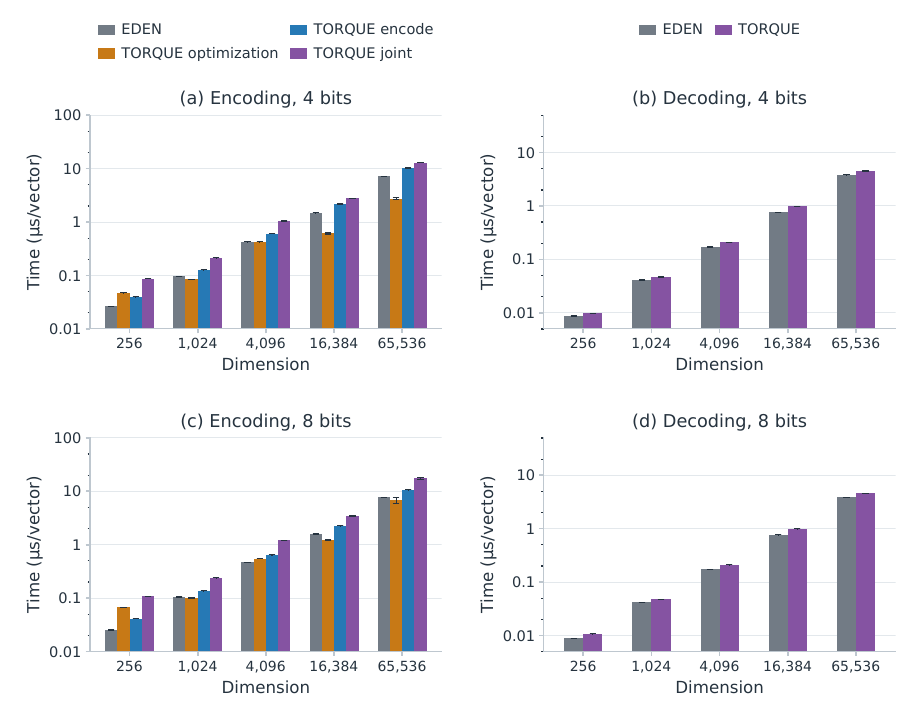}
\caption{Apple GPU timing, with the same layout as
Figure~\ref{fig:runtime-cpu}.}
\label{fig:runtime-metal}
\end{figure}

\paragraph{Inputs.}
We test $d\in\{256,1024,4096,16384,65536\}$, with $2^{23}/d$ vectors
per batch. Before timing, we generate independent Student-$t_3$
coordinates, normalize each vector in FP64, and store it in FP32.
Both implementations convert the inputs to FP16 before timing.
Seeds are $17,29,43,71,101$. NumPy~2.5.3 uses PCG64 with
\texttt{SeedSequence([seed,d,0])} for inputs and
\texttt{SeedSequence([seed,d,1])} for rotation signs. Both methods receive
the same inputs and randomized Hadamard rotations.

\paragraph{Budgets and codebooks.}
We use budgets of $b=4$ and $b=8$ expected bits per coordinate
for retained values, their indices, and quantizer codes.
For these measurements, both EDEN and \TrunQuant reserve an
additional $160$ bits per vector for metadata, excluded from $b$.
This stores norms, a reconstruction correction factor, retained
counts, and a quantizer identifier, and includes padding.

Retained values use FP16. Indices use eight bits at $d=256$
and sixteen bits at larger dimensions. Thus, for $d>256$, each
retained value and its index fit in one $32$-bit word.

We compute the codebooks offline using
\[
\begin{aligned}
S&=\{1+j/64:j=0,\ldots,448\},\\
C&=\{1.75,2,2.25,2.5,2.75,3,3.5,4,4.5,5,6,\infty\}.
\end{aligned}
\]
For each dimension and budget, we also precompute a list of feasible
parameter choices $(k,c,s)$ and their error estimates.
At runtime, we use the input to compare these choices and
select the parameters.

\paragraph{Measurements.}
We measure the time to process a batch and divide by its number
of vectors. After warm-up, we take the median runtime for each
seed and average across five seeds. Error bars show $95\%$
bootstrap confidence intervals.

CPU timings include synchronization between workers. GPU timings
cover execution on the device, with data already in GPU memory.
Each GPU timing sample processes $16$ batches and reports the average
time per batch.
Setup, host submission, and data transfers are excluded.

\paragraph{Results.}
Across the tested dimensions and budgets, the additional runtime
cost is concentrated in encoding, which includes online parameter
selection. With the parameters already selected, \TrunQuant\ encoding
takes $1.30$--$1.48\times$ EDEN's encoding time on the CPU and
$1.32$--$1.63\times$ on the GPU.
Including online selection increases these ranges to
$2.08$--$8.85\times$ and $1.82$--$4.28\times$, respectively.
The relative overhead is largest at $d=256$ and lower at every
larger dimension tested.
Decoding remains closer to EDEN throughout, taking
$1.12$--$1.41\times$ its decoding time on the CPU and
$1.13$--$1.29\times$ on the GPU.

\section{Codebook-dependent outlier retention}
\label{app:codebook-retention}

The norm threshold can be replaced by a threshold on quantization error.
Consider nearest-codeword quantization of $G\sim N(0,I_q)$ with a codebook
$Q\subset\R^q$ of $J$ points. Fix a fraction $p\in(0,1)$ of blocks to store
exactly; the remaining blocks are quantized. Let
\[
  d_Q(g)=\min_{a\in Q}\|g-a\|_2,
  \qquad \Prob\{d_Q(G)\le\tau_Q\}=1-p.
\]
For fixed $Q$, the best inlier region is
\begin{equation}
  A_Q=\{g:d_Q(g)\le\tau_Q\}
     =\bigcup_{a\in Q}B(a,\tau_Q),
  \label{eq:codebook-inlier-region}
\end{equation}
where $B(a,\tau_Q)$ is the Euclidean ball centered at $a$ with radius $\tau_Q$.
Indeed, exchanging a higher-error inlier for an equal-probability portion of
lower-error outliers reduces the error. Thus, the optimal rule quantizes the
$1-p$ fraction closest to the codebook and stores the rest exactly.

\paragraph{Joint offline optimization.}
For our objective, the Gaussian-model error per coordinate is
\[
  \epsilon(Q,p)
  =\frac1q\E\!\left[d_Q(G)^2\mathbf1_{\{G\in A_Q\}}\right]
  =\frac{\E[\min\{d_Q(G)^2,\tau_Q^2\}]-p\tau_Q^2}{q}.
\]
We can optimize $Q$ and $A_Q$ jointly by alternating two steps: (1) choose
$\tau_Q$ to give inlier probability $1-p$; (2) replace each codeword by
the Gaussian-weighted centroid of its Voronoi cell intersected with $A_Q$.
Each step does not increase the objective, although this procedure need not
find a global optimum. For each candidate codebook size and outlier fraction, we precompute a codebook, a distance threshold, and their expected \NMSE and storage cost. The existing optimizer can then select among these candidates under the bit budget. During encoding, however, each block must undergo a nearest-codeword search before the retention decision, including blocks ultimately stored at high precision. The simpler norm-based retention avoids this search.

\begin{figure}[htbp]
  \centering
  \captionsetup[subfloat]{font=small}
  \subfloat[$J=4$, $\epsilon\approx0.269643$\label{fig:codebook-retention-j4}]{%
    \includegraphics[width=0.40\linewidth]{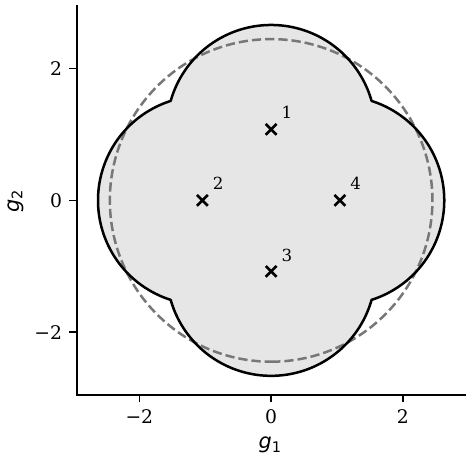}}
  \hspace{0.065\linewidth}%
  \subfloat[$J=8$, $\epsilon\approx0.143335$\label{fig:codebook-retention-j8}]{%
    \includegraphics[width=0.40\linewidth]{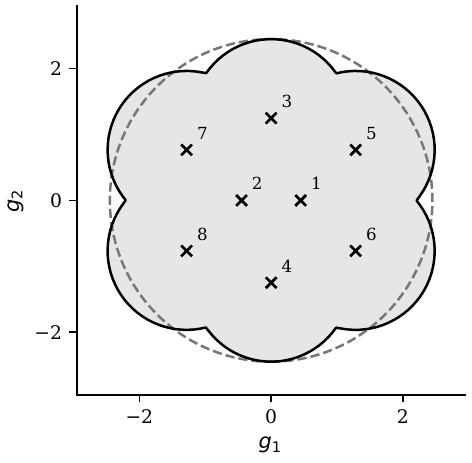}}
  \par\vspace{0.3em}
  \includegraphics[width=0.72\linewidth]{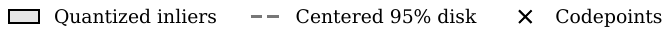}
  \caption{Codebooks numerically optimized for mean squared
  distance with $G\sim N(0,I_2)$ and $p=0.05$. The shaded union of disks and the dashed centered disk each
  contain $95\%$ of the Gaussian probability. Crosses mark all four or eight
  codewords. Blocks outside the shaded region are retained.}
  \label{fig:codebook-retention}
\end{figure}

Figure~\ref{fig:codebook-retention} shows why norm alone need not identify
the most costly blocks: some larger-norm blocks are close to a codeword,
while smaller-norm blocks between codewords can be harder to quantize.
The displayed codebooks are optimized numerically for \emph{mean squared
distance}. Their conditional mean squared distances among quantized blocks
are approximately $0.567670$ and $0.301758$ for $J=4$ and $J=8$, respectively.
Including zero error for the $5\%$ of blocks stored exactly and dividing by
$q=2$ gives Gaussian-model errors per coordinate
$\epsilon(Q,0.05)\approx0.269643$ and $0.143335$, respectively.
\par

\end{document}